\DeclareMathOperator*{\argmax}{arg\,max}
\DeclareMathOperator{\E}{\mathbb{E}}
\newtheorem{theorem}{Theorem}[section]
\newtheorem{corollary}[theorem]{Corollary}
\newtheorem{definition}[theorem]{Definition}
\newtheorem{proposition}[theorem]{Proposition}
\title{A Frequency-Domain Analysis of the Multi-Armed Bandit Problem: \\ A New Perspective on the Exploration-Exploitation Trade-off}
\author{
	Di Zhang \\
	School of Advanced Technology \\
	Xi'an Jiaotong-Liverpool University \\
	Suzhou, Jiangsu, China \\
	\texttt{di.zhang@xjtlu.edu.cn}
}
\date{}
\begin{document}
	
	\maketitle
	
	\begin{abstract}
		The stochastic multi-armed bandit (MAB) problem is one of the most fundamental models in sequential decision-making, with the core challenge being the trade-off between exploration and exploitation. Although algorithms such as Upper Confidence Bound (UCB) and Thompson Sampling, along with their regret theories, are well-established, existing analyses primarily operate from a time-domain and cumulative regret perspective, struggling to characterize the dynamic nature of the learning process. This paper proposes a novel frequency-domain analysis framework, reformulating the bandit process as a signal processing problem. Within this framework, the reward estimate of each arm is viewed as a spectral component, with its uncertainty corresponding to the component's frequency, and the bandit algorithm is interpreted as an adaptive filter. We construct a formal Frequency-Domain Bandit Model and prove the main theorem: the confidence bound term in the UCB algorithm is equivalent in the frequency domain to a time-varying gain applied to uncertain spectral components, a gain inversely proportional to the square root of the visit count. Based on this, we further derive finite-time dynamic bounds concerning the exploration rate decay. This theory not only provides a novel and intuitive physical interpretation for classical algorithms but also lays a rigorous theoretical foundation for designing next-generation algorithms with adaptive parameter adjustment.
		
		\textbf{Keywords:} Multi-Armed Bandit, Frequency-Domain Analysis, Exploration and Exploitation, Upper Confidence Bound Algorithm, Adaptive Filtering, Theoretical Computer Science
	\end{abstract}
	
	\section{Introduction}
	
	The stochastic multi-armed bandit (MAB) problem is a canonical model for studying the fundamental trade-off between exploration and exploitation \cite{lattimore2020bandit}. It serves not only as the core abstraction for numerous complex sequential decision-making problems (e.g., recommendation systems, clinical trials, network routing) but also as a testbed for algorithmic theory research. For decades, the UCB family of algorithms \cite{auer2002finite} and Bayesian methods (e.g., Thompson Sampling \cite{thompson1933likelihood}) have formed the theoretical backbone of the field, with their performance typically measured by cumulative regret bounds.
	
	However, classical regret analysis has inherent limitations. It describes the asymptotic behavior of cumulative loss over the entire time horizon but fails to reveal the dynamic learning process of the algorithm within a finite time. For instance, regret bounds cannot clearly answer: How does the algorithm allocate its attention across different phases (initial exploration, mid-term trade-off, late-stage convergence)? How should the exploration "rate" decay optimally over time? These dynamic characteristics are obscured in traditional time-domain analysis.
	
	This paper proposes a fundamental paradigm shift: re-examining the bandit problem from a frequency-domain perspective. We observe a profound analogy between the bandit decision process and adaptive signal filtering:
	\begin{itemize}
		\item \textbf{Stable exploitation} corresponds to the \textbf{low-frequency components} of a signal—arms with high visit counts and low estimation variance, whose expected rewards are stable.
		\item \textbf{Uncertain exploration} corresponds to the \textbf{high-frequency components}—arms with insufficient samples and high estimation variance, whose expected rewards are uncertain but also contain potential new information.
	\end{itemize}
	
	Building on this intuition, this paper establishes the first formal frequency-domain model for the bandit problem and derives several core theoretical results within this framework. The specific contributions are as follows:
	\begin{enumerate}
		\item We propose the \textbf{Frequency-Domain Bandit Model}, which maps arm reward sequences and algorithm policies to a spectral space and defines learning algorithms as adaptive filters.
		\item We prove the \textbf{Frequency-Domain Interpretation Theorem for the UCB algorithm}, revealing that its exploration mechanism corresponds to a time-varying filter with a specific gain function in the frequency domain.
		\item We derive \textbf{finite-time dynamic bounds} based on frequency-domain analysis, which reflect the phased characteristics of the learning process better than traditional regret bounds.
		\item We discuss the \textbf{implications for algorithm design} offered by the new framework, particularly providing theoretical guidance for the automatic adjustment of exploration parameters.
	\end{enumerate}
	
	The structure of this paper is as follows: Section \ref{sec:preliminaries} reviews the standard bandit model. Section \ref{sec:frequency_domain_model} presents our formal frequency-domain model. Section \ref{sec:main_theoretical_results} states and proves the main theorems. Section \ref{sec:discussion} discusses the theoretical implications and future directions. Section \ref{sec:conclusion} concludes the paper.
	
	\section{Preliminaries}\label{sec:preliminaries}
	
	\subsection{Standard Stochastic Multi-Armed Bandit Model}
	
	Consider a stochastic bandit problem with $K$ arms. Each arm $i \in [K]$ is associated with an independent reward distribution $\nu_i$ with mean $\mu_i$. At each time step $t = 1, 2, \dots, T$, an agent selects an arm $I_t \in [K]$ and receives a reward $R_t \sim \nu_{I_t}$. The agent's goal is to maximize the cumulative expected reward $\E[\sum_{t=1}^T R_t]$, which is equivalent to minimizing the cumulative regret:
	
	\begin{equation}
		R(T) = T \mu^* - \sum_{t=1}^T \E[\mu_{I_t}]
	\end{equation}
	where $\mu^* = \max_{i \in [K]} \mu_i$ is the expected reward of the optimal arm.
	
	\subsection{Upper Confidence Bound (UCB) Algorithm}
	
	The UCB1 algorithm \cite{auer2002finite} is one of the most famous algorithms for this problem. It maintains the empirical mean estimate $\hat{\mu}_i(t)$ and the visit count $N_i(t)$ for each arm $i$. At each step, it selects the arm that maximizes the following upper confidence bound:
	
	\begin{equation}
		I_t = \argmax_{i \in [K]} \left\{ \hat{\mu}_i(t-1) + c \sqrt{\frac{\ln t}{N_i(t-1)}} \right\}
		\label{eq:standard_ucb}
	\end{equation}
	where $c$ is an exploration constant.
	
	\section{Frequency-Domain Bandit Model}\label{sec:frequency_domain_model}
	
	\subsection{Core Intuition and Analogy}
	
	Our core viewpoint is to treat the bandit learning process as a spectral estimation problem. Each arm $i$ can be seen as a spectral component with a specific "frequency". Its "frequency" is determined by the arm's estimation uncertainty: arms with fewer visits and higher variance correspond to higher frequencies. The algorithm acts as an adaptive filter that needs to intelligently distribute its "energy" (i.e., selection probability) among different frequency components to maximize long-term gain.
	
	\subsection{Formal Definitions}
	
	\begin{definition}[Arm Spectral Component]
		\label{def:arm_spectral_component}
		For arm $i$ at time $t$, we define it as a spectral component $S_i(t)$, characterized by the following features:
		\begin{itemize}
			\item \textbf{Amplitude}: Its current estimate of the expected reward, denoted as $A_i(t) = \hat{\mu}_i(t)$.
			\item \textbf{Frequency}: An inverse measure of its estimation uncertainty, denoted as $\omega_i(t) \propto \frac{1}{\sqrt{N_i(t)}}$. Fewer visits $N_i(t)$ result in a higher frequency $\omega_i(t)$.
			\item \textbf{Energy}: The probability of this arm being selected, denoted as $E_i(t) = \mathbb{P}(I_t = i)$.
		\end{itemize}
	\end{definition}
	
	\begin{definition}[Policy Spectrum]
		\label{def:policy_spectrum}
		At time $t$, the agent's policy $\pi_t$ can be represented as the collection of all arm spectral components $\{S_i(t)\}_{i=1}^K$. The policy spectrum $\Pi_t(\omega)$ is a function over the frequency domain $\omega$, describing the allocation of selection probability energy around frequency $\omega$.
	\end{definition}
	
	\begin{definition}[Learning Filter]
		\label{def:learning_filter}
		A bandit algorithm $\mathcal{A}$ can be represented as a learning filter $\mathcal{F}_{\mathcal{A}}$, which maps the history $\mathcal{H}_{t-1} = \{(I_s, R_s)\}_{s=1}^{t-1}$ to the current policy spectrum $\Pi_t$:
		\begin{equation}
			\Pi_t = \mathcal{F}_{\mathcal{A}}(\mathcal{H}_{t-1})
		\end{equation}
		The design of the filter determines how the algorithm trades off between different frequency components (i.e., arms with varying certainty).
	\end{definition}
	
	\subsection{Frequency-Domain Interpretation of Classical Algorithms}
	
	\begin{proposition}[UCB as a High-Pass Filter Enhancer]
		\label{prop:ucb_highpass}
		The learning filter $\mathcal{F}_{\text{UCB}}$ corresponding to the UCB algorithm (Eq. \ref{eq:standard_ucb}) is an \textbf{adaptive high-pass filter enhancer}. Its operation can be decomposed as:
		\begin{enumerate}
			\item \textbf{Baseband Estimation}: Compute the baseband signal (empirical mean) $\hat{\mu}_i(t)$ for each arm.
			\item \textbf{High-Frequency Gain}: Apply a gain $G_i(t) = c \cdot \omega_i(t)$ to each arm, proportional to its frequency $\omega_i(t) \propto 1/\sqrt{N_i(t)}$.
			\item \textbf{Frequency Selection}: Select the arm with the strongest composite signal (baseband + gained signal).
		\end{enumerate}
		Thus, the UCB filter dynamically enhances the "apparent strength" of high-frequency (high-uncertainty) arms, thereby promoting exploration.
	\end{proposition}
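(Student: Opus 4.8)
The plan is to proceed by direct structural identification, matching the three stages of the proposition to the closed form of the UCB decision rule in Eq.~\eqref{eq:standard_ucb} together with the spectral features fixed in Definition~\ref{def:arm_spectral_component}. Because the claim is a reformulation rather than an inequality, the substance of the argument lies in making each correspondence precise and then verifying the qualitative ``high-pass'' property; there is no concentration or regret estimate to control here.

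First I would write the UCB index additively as a baseband term plus a confidence term, $U_i(t) = \hat{\mu}_i(t-1) + c\sqrt{\ln t / N_i(t-1)}$, and invoke Definition~\ref{def:arm_spectral_component} to identify the first summand with the amplitude $A_i(t-1) = \hat{\mu}_i(t-1)$. This is exactly the baseband (empirical-mean) estimate of stage~1 and the identification is immediate. I would note in passing that the standard one-step index convention aligns the count-based quantities at $t-1$ with the spectral features, so no essential mismatch arises.

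The crux is the second stage. Factoring the confidence term gives $c\sqrt{\ln t}\cdot N_i(t-1)^{-1/2}$, and by the frequency definition $\omega_i \propto N_i^{-1/2}$ the factor $N_i(t-1)^{-1/2}$ is, up to its proportionality constant, the arm's frequency $\omega_i$. Hence the confidence term equals $c\sqrt{\ln t}\,\omega_i$, which I read as a multiplicative gain on the frequency. I expect the main obstacle to be reconciling this with the proposition's stated form $G_i(t) = c\cdot\omega_i(t)$: the genuine UCB term carries an extra slowly varying factor $\sqrt{\ln t}$ that the bare statement suppresses. I would resolve this by declaring the gain \emph{time-varying}, absorbing $\sqrt{\ln t}$ into an effective horizon-dependent coefficient $c(t) = c\sqrt{\ln t}$, so that $G_i(t) = c(t)\,\omega_i(t)$ holds exactly and the proposition's $c$ is understood as this effective coefficient at time $t$. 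Making the proportionality constant in $\omega_i$ explicit and tracking it through the factorization is the only place requiring genuine care.

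Finally I would assemble the composite signal $A_i(t-1) + G_i(t)$ and observe that $\argmax_{i}\{A_i(t-1) + G_i(t)\}$ is precisely the frequency-selection step~3. To justify the label \emph{high-pass enhancer}, I would note that $G_i(t) = c(t)\,\omega_i(t)$ is strictly increasing in $\omega_i$ whenever $c(t) > 0$, so the transformation leaves low-frequency (high-$N_i$, near-certain) components essentially unamplified while boosting high-frequency (low-$N_i$, uncertain) components most strongly. A short monotonicity argument in $\omega_i$, equivalently a decreasing argument in $N_i$, then closes the claim that the filter selectively enhances the apparent strength of uncertain arms, completing the decomposition.
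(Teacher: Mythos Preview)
The paper does not give a standalone proof of this proposition; it is presented as an interpretive claim, with the formal content deferred to Theorem~\ref{thm:frequency_domain_ucb}, whose proof follows exactly the structural-identification strategy you outline: write the UCB index as amplitude plus gain, identify the confidence term with a frequency-proportional enhancement via Definition~\ref{def:arm_spectral_component}, and read off the argmax as the selection step. Your proposal is correct and matches that approach; if anything, your treatment of the $\sqrt{\ln t}$ factor (absorbing it into an effective time-varying coefficient $c(t)$) is more explicit than the paper's, which in the proof of Theorem~\ref{thm:frequency_domain_ucb} simply writes $G_i(t) = \beta\,\omega_i(t)\,\sigma\sqrt{\ln t}$ without flagging the tension with the bare form $G_i(t) = c\cdot\omega_i(t)$ stated in the proposition.
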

	
	\begin{proposition}[$\epsilon$-Greedy as a Low-Pass Filter and Noise Injector]
		\label{prop:epsilon_greedy}
		The learning filter $\mathcal{F}_{\epsilon\text{-Greedy}}$ corresponding to the $\epsilon$-Greedy algorithm is a composite filter:
		\begin{enumerate}
			\item With probability $1-\epsilon$, apply an \textbf{ideal low-pass filter}, passing only the lowest-frequency arm (i.e., the current best arm).
			\item With probability $\epsilon$, apply a \textbf{white noise generator}, uniformly distributing selection probability energy across all frequency components.
		\end{enumerate}
	\end{proposition}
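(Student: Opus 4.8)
The plan is to derive the exact selection-probability profile induced by $\epsilon$-Greedy and then translate it, term by term, into the policy-spectrum language of Definition~\ref{def:policy_spectrum}. First I would write, directly from the algorithm's definition, the energy (selection probability) of each arm as the two-term mixture
\begin{equation}
E_i(t) = \mathbb{P}(I_t = i) = (1-\epsilon)\,\mathbf{1}\!\left[\,i = \argmax_{j\in[K]} \hat{\mu}_j(t-1)\,\right] + \epsilon\cdot\frac{1}{K},
\end{equation}
with any fixed tie-breaking rule folded into the indicator. Because this energy is an affine combination of the greedy indicator vector and the uniform vector, and because the policy spectrum $\Pi_t(\omega)$ is by definition nothing but the arrangement of these energies along the frequency axis $\omega_i(t)\propto 1/\sqrt{N_i(t)}$, the two summands can be analyzed separately and recombined by linearity.

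For the greedy term I would argue that conditioning on the exploitation event concentrates all energy on a single arm, so the policy spectrum collapses to a lone spectral spike (a discrete Dirac mass) located at the frequency of the selected arm. This is precisely the frequency-domain signature of an \emph{ideal} pass filter: every component but one is annihilated, with unit gain on the survivor and zero gain elsewhere. For the uniform term I would observe that assigning mass $\epsilon/K$ to every arm yields an energy value that is independent of $\omega_i$, i.e.\ a flat, frequency-agnostic profile — exactly the constant power-spectral-density characterization of white noise. Summing the two spectra recovers the claimed composite filter, and the probabilistic mixture weights $1-\epsilon$ and $\epsilon$ carry over verbatim to the two filtering modes.

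The main obstacle is the clause identifying the exploited arm with the \emph{lowest-frequency} arm, which is what upgrades the spike from a generic ideal pass filter to a genuine low-pass filter. The greedy rule selects the arm of maximal amplitude $\hat{\mu}_i$, whereas the lowest-frequency arm is the one of maximal visit count $N_i$; in finite time these need not coincide, so the low-pass label is not literally exact. I would resolve this through a self-consistency argument exploiting the induced dynamics: the greedy component repeatedly reselects its chosen arm, driving up that arm's $N_i$ and hence driving down its frequency $\omega_i\propto 1/\sqrt{N_i}$, so the exploited arm is continually pushed toward the low-frequency end of the spectrum. Making this rigorous requires either an explicit hypothesis that the empirical-best arm is also the most-visited arm (which holds once the optimal arm has been identified) or a limiting statement about the stationary allocation. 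I expect this alignment of the amplitude ranking with the frequency ranking, rather than the spectral bookkeeping of the mixture, to be the delicate part of the argument.
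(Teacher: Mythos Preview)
The paper does not supply a proof of this proposition; it is stated as an interpretive observation within the framework and left unproved. There is therefore no ``paper's own proof'' to compare against. Your decomposition of $E_i(t)$ into the greedy indicator and the uniform $\epsilon/K$ term, followed by the term-by-term spectral reading (Dirac spike versus flat profile), is the natural way to justify the statement and is more explicit than anything the paper offers.

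The issue you flag in your final paragraph is real and is not addressed in the paper. By Definition~\ref{def:arm_spectral_component} the lowest-frequency arm is the most-visited arm, whereas the greedy arm is the one with the largest empirical mean; in finite time these can differ, so the ``low-pass'' label is, as you note, a heuristic rather than a literal statement. The paper simply asserts the parenthetical identification ``lowest-frequency arm (i.e., the current best arm)'' without justification. Your proposed self-consistency argument---that repeated greedy selection drives the chosen arm's frequency downward---is a reasonable way to rescue the claim asymptotically, but it does not hold pointwise at every $t$, and the paper gives no hypothesis under which it would. If you want to make the proposition precise, the cleanest fix is either to redefine ``low-pass'' in this context as passing the highest-\emph{amplitude} component (which is what the greedy step actually does), or to add an explicit assumption that $\argmax_j \hat{\mu}_j(t-1) = \argmax_j N_j(t-1)$, which is satisfied in the eventual regime once the optimal arm is identified.
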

	
	\section{Main Theoretical Results}\label{sec:main_theoretical_results}
	
	Based on the formal model in Section \ref{sec:frequency_domain_model}, we now state and prove the core theorems of this paper.
	
	\begin{theorem}[Frequency-Domain Interpretation of UCB]
		\label{thm:frequency_domain_ucb}
		Consider a $K$-armed bandit problem where the reward distributions are $\sigma^2$-sub-Gaussian. Under the Frequency-Domain Bandit Model (Definitions \ref{def:arm_spectral_component} - \ref{def:learning_filter}), the confidence bound term $c \sqrt{\frac{\ln t}{N_i(t)}}$ in the UCB algorithm (Eq. \ref{eq:standard_ucb}) is equivalent in the frequency domain to a \textbf{time-varying gain} $G_i(t)$ applied to the uncertainty spectral component of arm $i$, satisfying:
		\begin{equation}
			G_i(t) = \alpha \sigma \sqrt{\frac{\ln t}{N_i(t)}}
			\label{eq:gain_function}
		\end{equation}
		where $\alpha$ is a problem-independent constant. This gain function enables the algorithm to dynamically enhance the salience of high-frequency (low $N_i(t)$) arms.
	\end{theorem}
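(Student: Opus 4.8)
The plan is to obtain the gain function directly from the sub-Gaussian concentration inequality that already underlies the UCB bonus, and then to reconcile the resulting expression with the frequency-domain objects of Definitions \ref{def:arm_spectral_component}--\ref{def:learning_filter}. Essentially all of the mathematical content sits in the concentration step; the remainder is unpacking definitions and identifying constants, in the spirit of Proposition \ref{prop:ucb_highpass}.

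First I would fix a round $t$ and an arm $i$ and treat $\hat\mu_i(t)$ as the empirical average of $N_i(t)$ independent $\sigma^2$-sub-Gaussian rewards. The defining sub-Gaussian tail bound gives, for every $\epsilon > 0$,
\[
	\mathbb{P}\bigl(|\hat\mu_i(t) - \mu_i| \ge \epsilon\bigr) \le 2\exp\Bigl(-\frac{N_i(t)\,\epsilon^2}{2\sigma^2}\Bigr).
\]
Equating the right-hand side with a target confidence level $\delta_t$ and solving for $\epsilon$ yields the confidence half-width
\[
	\epsilon_i(t) = \sigma\sqrt{\frac{2\ln(2/\delta_t)}{N_i(t)}},
\]
which is exactly the additive quantity the UCB rule of Eq. \ref{eq:standard_ucb} places on top of the baseband estimate $\hat\mu_i(t)$ before taking the $\argmax$. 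Instantiating the anytime schedule $\delta_t \asymp t^{-\beta}$ for a fixed $\beta > 0$ --- the schedule that produces the logarithmic growth characteristic of UCB1 --- gives $\ln(2/\delta_t) = \beta\ln t + \ln 2 = \Theta(\ln t)$, so that $\epsilon_i(t) = \alpha\,\sigma\sqrt{\ln t / N_i(t)}\,(1 + o(1))$ with $\alpha = \sqrt{2\beta}$ a manifestly problem-independent constant. Matching the leading coefficient against the UCB constant $c$ fixes $\alpha$ and recovers Eq. \ref{eq:gain_function} (for $[0,1]$-bounded rewards the sub-Gaussian parameter $\sigma$ is itself a universal constant and is absorbed into $\alpha$).

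It then remains to make the frequency-domain reading explicit. By Definition \ref{def:arm_spectral_component}, the uncertainty component of arm $i$ carries frequency $\omega_i(t) \propto 1/\sqrt{N_i(t)}$, so the bonus factors as
\[
	\epsilon_i(t) = \bigl(\alpha\,\sigma\sqrt{\ln t}\bigr)\cdot\omega_i(t),
\]
i.e.\ a purely time-dependent envelope $\alpha\sigma\sqrt{\ln t}$ multiplying the frequency of the component. This is precisely a transfer factor that scales linearly with frequency --- hence amplifies high-frequency (small-$N_i(t)$) arms, matching the high-pass-enhancer picture of Proposition \ref{prop:ucb_highpass} --- and that drifts in time only through the slowly growing $\sqrt{\ln t}$ envelope, justifying the term \emph{time-varying gain}. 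Reading $G_i(t) := \epsilon_i(t)$ delivers the stated form; note that this refines the static proportionality $G_i(t) = c\,\omega_i(t)$ of Proposition \ref{prop:ucb_highpass} by exposing the $\sqrt{\ln t}$ dependence hidden in $c$.

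I expect the genuine difficulty to be conceptual rather than computational, and to live in two places. First, the frequency-domain model is specified only up to the proportionality constants in $\omega_i(t) \propto 1/\sqrt{N_i(t)}$ and in the gain, so to turn ``the bonus equals a gain applied to the uncertainty component'' into an identity rather than a mere proportionality I would have to declare a normalization convention --- e.g.\ fixing $\omega_i(t) = 1/\sqrt{N_i(t)}$ and letting the gain act multiplicatively on this magnitude --- and argue that the theorem is invariant under the remaining scaling freedom. Second, treating $\hat\mu_i(t)$ as an average of a \emph{fixed} number of i.i.d.\ draws ignores that $N_i(t)$ is random and history-dependent; a fully rigorous version would replace the single-$n$ Hoeffding bound with a maximal (uniform-over-$n$) concentration inequality, which alters the constant $\alpha$ but leaves the claimed $\sqrt{\ln t / N_i(t)}$ form intact.
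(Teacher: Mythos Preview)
Your proposal is correct and follows essentially the same route as the paper: derive the confidence half-width from the sub-Gaussian tail bound, then factor it as a time-dependent envelope times the frequency $\omega_i(t)\propto 1/\sqrt{N_i(t)}$ and read off the gain. If anything, your version is more explicit than the paper's --- you actually write out the Hoeffding step and the $\delta_t\asymp t^{-\beta}$ instantiation, and you flag the normalization and random-$N_i(t)$ issues that the paper simply elides.
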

	
	\begin{proof}
		The core of the proof lies in establishing the equivalence between the UCB decision rule and a signal enhancement process in the frequency domain.
		
		Define the enhanced signal strength of arm $i$ at time $t$ as:
		\[ \tilde{S}_i(t) = A_i(t) + G_i(t) \]
		where $A_i(t) = \hat{\mu}_i(t)$ is the baseband amplitude estimate. The UCB algorithm selects $I_t = \argmax_i \tilde{S}_i(t)$.
		
		From the frequency-domain perspective, $A_i(t)$ is the low-frequency component (stable estimate), while $G_i(t)$ is the artificially added high-frequency component (uncertainty bonus). We need to prove that $G_i(t)$ takes the form shown in Eq. \ref{eq:gain_function}.
		
		According to the sub-Gaussian assumption, the confidence radius for the empirical mean of arm $i$ takes the form $c \sigma \sqrt{\frac{\ln t}{N_i(t)}}$, where $c$ is a constant. This confidence radius statistically quantifies the uncertainty of the estimate $\hat{\mu}_i(t)$. In the frequency-domain model, this uncertainty directly corresponds to the frequency $\omega_i(t)$ of the spectral component. Specifically, the standard error of the estimate, $\sigma / \sqrt{N_i(t)}$, is proportional to the frequency $\omega_i(t)$.
		
		Therefore, it is reasonable to treat the confidence radius as a gain for the high-frequency component. Let $G_i(t) = \beta \cdot \omega_i(t) \cdot \sigma \sqrt{\ln t}$, where $\beta$ is a gain constant. Substituting $\omega_i(t) \propto 1 / \sqrt{N_i(t)}$, we obtain:
		\[ G_i(t) = \alpha \sigma \sqrt{\frac{\ln t}{N_i(t)}} \]
		where $\alpha$ incorporates the proportionality constant and the gain constant $\beta$. This is precisely the exploration term used in the UCB algorithm (taking $c = \alpha \sigma$). We have thus proven that the exploration mechanism of UCB is equivalent in the frequency domain to an adaptive gain defined by Eq. \ref{eq:gain_function}.
	\end{proof}
	
	\begin{theorem}[Finite-Time Dynamic Bound]
		\label{thm:finite_time_dynamic_bound}
		Under the same assumptions as Theorem \ref{thm:frequency_domain_ucb}, by time $T$, the \textbf{cumulative spectral energy variation} $V(T)$ of the UCB algorithm's policy spectrum $\Pi_t$ satisfies:
		\begin{equation}
			V(T) = \sum_{t=1}^T \sum_{i=1}^K |E_i(t) - E_i^*(t)|^2 \leq C K \sigma^2 \ln T
			\label{eq:dynamic_bound}
		\end{equation}
		where $E_i^*(t)$ is the energy allocation of the ideal optimal filter (concentrating all energy on the optimal arm), and $C$ is a constant. This bound quantifies the upper limit of the policy fluctuation for the UCB algorithm.
	\end{theorem}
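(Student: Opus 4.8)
The plan is to reduce the spectral energy variation $V(T)$ to the cumulative probability of selecting a suboptimal arm, and then to invoke the standard UCB sample-complexity bound obtained from the sub-Gaussian concentration inequality. Since the ideal filter concentrates all energy on the optimal arm $i^*$, we have $E_{i^*}^*(t) = 1$ and $E_i^*(t) = 0$ for $i \neq i^*$. Writing $p_t := E_{i^*}(t) = \mathbb{P}(I_t = i^*)$, the per-step summand splits as
\[
\sum_{i=1}^K |E_i(t) - E_i^*(t)|^2 = (1 - p_t)^2 + \sum_{i \neq i^*} E_i(t)^2 .
\]

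First I would exploit the fact that the energies form a probability distribution, $\sum_i E_i(t) = 1$, so that $1 - p_t = \sum_{i \neq i^*} E_i(t)$. Because every $E_i(t) \in [0,1]$, the elementary inequality $x^2 \leq x$ on the unit interval yields both $(1 - p_t)^2 \leq 1 - p_t$ and $E_i(t)^2 \leq E_i(t)$. Combining these gives the clean per-step estimate
\[
\sum_{i=1}^K |E_i(t) - E_i^*(t)|^2 \leq 2 \sum_{i \neq i^*} E_i(t) ,
\]
so that the spectral deviation at each step is controlled, up to a factor of two, by the total selection-probability mass placed on suboptimal arms.

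Next I would sum over $t$ and swap the order of summation. Since $\sum_{t=1}^T E_i(t) = \sum_{t=1}^T \mathbb{P}(I_t = i) = \E[N_i(T)]$, this converts the time-cumulative quantity into expected pull counts, giving $V(T) \leq 2 \sum_{i \neq i^*} \E[N_i(T)]$. The final ingredient is the classical UCB guarantee: for $\sigma^2$-sub-Gaussian rewards, the confidence radius of Theorem \ref{thm:frequency_domain_ucb} forces any suboptimal arm $i$ to be pulled only $\E[N_i(T)] \leq C' \sigma^2 (\ln T)/\Delta_i^2 + O(1)$ times, where $\Delta_i = \mu^* - \mu_i$ is the suboptimality gap. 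Summing over the $K-1$ suboptimal arms and collecting constants then produces a bound of the order $K \sigma^2 \ln T$.

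The main obstacle is reconciling the gap-dependent per-arm count with the gap-free form claimed in Eq. \ref{eq:dynamic_bound}: the honest UCB bound carries a factor $1/\Delta_i^2$, whereas the stated inequality has no explicit gap. The cleanest route is to absorb the gaps into the constant $C$, reading it as a problem-dependent constant proportional to $1/\Delta_{\min}^2$ with $\Delta_{\min} = \min_{i \neq i^*} \Delta_i$. A genuinely gap-free derivation would instead partition the arms into a large-gap regime (handled by the count bound above) and a small-gap regime (where a suboptimal pull contributes little deviation anyway), balancing the two thresholds; this refinement is where the delicate part of the argument lies. I would therefore fix the intended reading of $C$ explicitly before carrying out the routine constant-chasing.
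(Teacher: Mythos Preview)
Your proposal is correct and follows essentially the same route as the paper's proof sketch: reduce $V(T)$ to $\sum_{i\neq i^*}\E[N_i(T)]$ via the trivial bound $x^2\le x$ on $[0,1]$, then invoke the classical UCB pull-count estimate $\E[N_i(T)]=O(\sigma^2\ln T/\Delta_i^2)$. Your explicit remark that the constant $C$ must absorb a factor of order $1/\Delta_{\min}^2$ is accurate and in fact sharper than the paper's sketch, which leaves this dependence implicit.
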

	
	\begin{proof}[Proof Sketch]
		The cumulative spectral energy variation $V(T)$ measures the total deviation of the algorithm's policy from the ideal terminal policy.
		
		\begin{enumerate}
			\item First, note that $|E_i(t) - E_i^*(t)|$ is non-zero for suboptimal arms $i$ and is proportional to the probability of these arms being selected.
			\item According to the classical regret analysis of UCB, the expected number of times a suboptimal arm $i$ is selected by time $T$, $\E[N_i(T)]$, is bounded above by $O(\frac{\sigma^2 \ln T}{\Delta_i^2})$, where $\Delta_i = \mu^* - \mu_i$.
			\item The deviation in policy energy $|E_i(t) - 1|$ (for the optimal arm) or $|E_i(t) - 0|$ (for suboptimal arms) is $O(1)$ when arm $i$ is selected.
			\item Therefore, the sum of squared deviations $V(T)$ can be bounded by $\sum_{i \neq i^*} \E[N_i(T)]$, leading to the upper bound in Eq. \ref{eq:dynamic_bound}.
		\end{enumerate}
		This bound shows that the policy of the UCB algorithm does not oscillate violently but converges to the ideal terminal state in a controlled manner.
	\end{proof}
	
	\begin{corollary}[Optimal Exploration Rate]
		\label{cor:optimal_exploration_rate}
		For problems where the reward gaps satisfy $\Delta_{\min} = \min_{i \neq i^*} \Delta_i > 0$, there exists an \textbf{optimal exploration gain decay rate}. Any gain setting that deviates from $G_i(t) \propto 1/\sqrt{N_i(t)}$ leads to a suboptimal regret bound. Specifically:
		\begin{itemize}
			\item Slower decay (e.g., $G_i(t) \propto 1/{N_i(t)^\alpha}$ with $\alpha < 1/2$) leads to \textbf{over-exploration}, increasing the constant term of the regret.
			\item Faster decay (e.g., $\alpha > 1/2$) leads to \textbf{under-exploration}, potentially failing to identify suboptimal arms promptly, increasing the coefficient of the logarithmic regret term.
		\end{itemize}
	\end{corollary}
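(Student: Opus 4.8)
The plan is to embed the corollary in a generalized UCB family indexed by the decay exponent $\alpha$, namely the index $U_i(t) = \hat\mu_i(t-1) + c\,(\ln t)^{1/2}/N_i(t-1)^\alpha$, which recovers the standard rule of Eq.~(\ref{eq:standard_ucb}) precisely at $\alpha = 1/2$ and, by Theorem~\ref{thm:frequency_domain_ucb}, identifies the bonus with the gain $G_i(t)$. The goal is then to show that the per-instance regret $R(T) = \sum_{i \neq i^*} \Delta_i\,\E[N_i(T)]$ is order-optimal if and only if $\alpha = 1/2$, and to attribute the two-sided degradation to the stated failure modes. First I would lay down the concentration scaffold: by the $\sigma$-sub-Gaussian assumption and a union bound over $t$, the bonus is a valid confidence width for arm $i$ exactly when $c\,(\ln t)^{1/2} N_i^{1/2-\alpha} \gtrsim \sigma\sqrt{\ln t}$. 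This single inequality is the hinge on which the three regimes turn, since its validity hinges on the sign of $1/2 - \alpha$.

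I would then dispatch the over-exploration case ($\alpha < 1/2$) via the optimistic argument, which is legitimate here because the bonus is more than adequate. On the good event a suboptimal arm $i$ is pulled at round $t$ only when $2 G_i(t) \geq \Delta_i$, i.e. only while $N_i(t-1) \leq (2c\sqrt{\ln T}/\Delta_i)^{1/\alpha}$. Substituting this critical count into the regret decomposition gives $R(T) = O\big(\sum_{i \neq i^*} \Delta_i (2c\sqrt{\ln T}/\Delta_i)^{1/\alpha}\big) = O\big((\ln T)^{1/(2\alpha)}\big)$, and since $1/(2\alpha) > 1$ the regret is super-logarithmic — strictly worse than $\Theta(\ln T)$, which is the rigorous reading of ``over-exploration increases the regret.'' A matching $\Omega$ lower bound on the pull count follows by the easier observation that, with the optimal arm well estimated, the inflated bonus keeps the suboptimal index competitive until $N_i$ reaches the threshold. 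At $\alpha = 1/2$ the same computation yields $R(T) = O\big(\sum_{i \neq i^*} \ln T / \Delta_i\big)$, which I would certify as order-optimal by invoking the Lai--Robbins lower bound (\cite{lattimore2020bandit}), thereby pinning $\alpha = 1/2$ as the break-even exponent.

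The hard part is the under-exploration case ($\alpha > 1/2$), because the optimistic bound only upper-bounds pulls and hence cannot exhibit failure; I need a genuine lower bound on the regret of this specific algorithm. Here the confidence hinge fails: for large $N_i$ the factor $N_i^{1/2-\alpha} \to 0$, so the bonus shrinks faster than the fluctuation $\sigma\sqrt{\ln t / N_i}$ of the empirical mean and the index forfeits its over-estimation property. My plan is to construct a two-armed instance and exhibit a constant-probability ``lock-in'' event: after the initial pulls, with probability at least some $p > 0$ the optimal arm's empirical mean underestimates $\mu^*$ by more than its now-vanishing bonus, its index drops below the suboptimal arm's, and the optimal arm is thereafter starved while its estimate stays frozen, forcing $\Omega(T)$ regret on that event and hence $\E[R(T)] = \Omega(T)$. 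This is the technical crux, and it is a strictly stronger statement than the corollary's informal ``failing to identify suboptimal arms.'' Formalizing the lock-in demands an anti-concentration and martingale argument that controls the coupled dynamics between not pulling the optimal arm and its estimate remaining stale — precisely the feedback loop between selection and estimation that the optimistic analysis is engineered to sidestep and that must here be confronted head on. Assembling the three regimes then yields that $\alpha = 1/2$, i.e. $G_i(t) \propto 1/\sqrt{N_i(t)}$, is the unique order-optimal decay rate, proving the corollary.
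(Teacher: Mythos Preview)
The paper offers no proof of this corollary: it is stated immediately after the proof sketch of Theorem~\ref{thm:finite_time_dynamic_bound} and the text proceeds directly to Section~\ref{sec:discussion} without any supporting argument, so there is nothing to compare your proposal against.

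On its own merits your plan is sound and considerably more rigorous than anything the paper attempts. The $\alpha$-indexed family is the right testbed, and the optimistic pull-count argument for $\alpha \le 1/2$ is standard and correct, delivering $R(T) = O\bigl((\ln T)^{1/(2\alpha)}\bigr)$ together with the Lai--Robbins lower bound at $\alpha = 1/2$. Two remarks. First, what you actually prove is stronger than, and not quite aligned with, the corollary's bullet points: for $\alpha < 1/2$ you degrade the \emph{order} of the regret (super-logarithmic), not merely ``the constant term,'' and for $\alpha > 1/2$ your lock-in targets polynomial regret, not merely a larger logarithmic coefficient---so the corollary's phrasing is looser than the underlying truth. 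Second, a technical caution on the lock-in construction: because your bonus retains the $\sqrt{\ln t}$ factor, a starved optimal arm with frozen $N_{i^*}$ has an index that slowly drifts upward and will eventually re-enter contention, so the lock-in is transient rather than permanent; you will need to quantify the escape time (of order $\exp(c\,N_{i^*}^{2\alpha})$) and choose the initial starvation level relative to the horizon, rather than assert outright $\Omega(T)$ regret for all $T$. With that adjustment the three-regime argument goes through and establishes the uniqueness of $\alpha = 1/2$, which is more than the paper itself supplies.
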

	
\section{Discussion and Implications}\label{sec:discussion}

\subsection{Theoretical Significance}

Our frequency-domain framework provides a new dimension for understanding bandit algorithms.

Traditional regret bounds describe cumulative loss, whereas our spectral energy bound (Theorem \ref{thm:finite_time_dynamic_bound}) describes the evolution dynamics of the policy itself. This shift in perspective allows for a more nuanced understanding of how bandit algorithms behave during different phases of the learning process.

The framework also offers a unified algorithmic perspective. Disparate algorithms like UCB and $\epsilon$-Greedy, which appear quite distinct in their standard formulations, can be seen as members of the same family with different filter characteristics when viewed through the frequency-domain lens (Propositions \ref{prop:ucb_highpass} and \ref{prop:epsilon_greedy}). This unification suggests deeper connections between seemingly unrelated algorithmic approaches.

Furthermore, the exploration-exploitation trade-off acquires a clear physical interpretation within this framework. The fundamental challenge becomes one of balancing the "robustness" of the signal, achieved through low-pass filtering of well-understood arms, against the "detection of novelty" through high-pass enhancement of uncertain arms. This physical analogy provides intuitive grounding for what is often treated as an abstract mathematical problem.

\subsection{Implications for Algorithm Design}

The frequency-domain perspective has several important implications for the design of bandit algorithms.

Corollary \ref{cor:optimal_exploration_rate} indicates that the $1/\sqrt{N_i(t)}$ gain decay is optimal in a certain sense. This finding explains that the success of UCB is not accidental but rather conforms to a "natural" filtering principle that emerges from the fundamental structure of the exploration-exploitation problem. The specific form of the exploration bonus in UCB appears to be particularly well-suited to the spectral characteristics of bandit problems.

Our framework also provides principled guidance for the automatic setting of the exploration constant $c$. This parameter, which controls the initial passband width of the filter, can be calibrated based on the estimated reward variance $\sigma^2$ according to the relationship $c \propto \sigma$. This offers a theoretical foundation for parameter tuning that has traditionally been more art than science.

Perhaps most excitingly, the frequency-domain perspective inspires entirely new algorithm designs. One could envision a "Frequency-Domain Adaptive UCB" whose gain function $G_i(t)$ is dynamically adjusted based on the real-time estimated "spectral flatness" of the entire arm set. Such an algorithm would automatically regulate exploration intensity depending on problem difficulty, potentially achieving more robust performance across diverse problem instances.

\subsection{Limitations and Future Work}

The framework proposed in this paper is foundational and can be extended in several important directions.

The current model primarily describes linear gains, which captures the behavior of deterministic algorithms like UCB well. However, extending the framework to stochastic algorithms like Thompson Sampling will require introducing nonlinear filtering concepts such as stochastic resonance. This extension would provide a more comprehensive theoretical framework encompassing both major families of bandit algorithms.

Another promising direction involves non-stationary problems, where arm rewards change over time and spectral characteristics evolve accordingly. Future work could investigate time-varying spectral estimation in non-stationary bandits, potentially leading to new algorithms that can adapt more effectively to changing environments.

Finally, this work lays a solid foundation for generalizing frequency-domain analysis to Monte Carlo Tree Search (MCTS). In MCTS, each node can be treated as an independent bandit, and the entire tree search process can be viewed as a complex filtering operation in a multi-resolution frequency domain. This connection suggests the possibility of a unified theory of decision-making that spans both flat and hierarchical decision problems.
	
	\section{Conclusion}\label{sec:conclusion}
	
	This paper has pioneered a frequency-domain analysis framework for the multi-armed bandit problem. By modeling arm reward estimates as spectral components and reinterpreting learning algorithms as adaptive filters, we have established a new theoretical paradigm for the classic exploration-exploitation trade-off.
	
	We have proven that the UCB algorithm is equivalent in the frequency domain to an adaptive high-pass filter applying a specific time-varying gain to uncertain components and have derived finite-time bounds describing the dynamic evolution of the policy. This theory not only deepens our understanding of existing algorithms but, more importantly, provides a powerful theoretical tool and an intuitive physical picture for systematically designing and analyzing the next generation of adaptive bandit algorithms.
	

\end{document}